\DeclareMathOperator*{\minimize}{minimize~}
\DeclareMathOperator*{\subjto}{subject\,to~}
\newcommand{\R}{\mathbb R}
\newcommand{\di}{\mathrm{d}}
\newcommand{\mc}{\mathcal}
\newcommand{\msc}{\mathscr}
\newcommand{\tr}{^T}
\newcommand{\inv}{^{-1}}
\begin{document}

\mainmatter

\title{A Constrained-Optimization Approach to the\\Execution of Prioritized Stacks of\\Learned Multi-Robot Tasks}

\titlerunning{Multi-Robot Multi-Learned-Tasks}

\author{Gennaro Notomista}

\authorrunning{Gennaro Notomista}

\tocauthor{Gennaro Notomista}

\institute{Department of Electrical and Computer Engineering\\University of Waterloo\\Waterloo, ON, Canada\\\email{gennaro.notomista@uwaterloo.ca}}

\maketitle

\begin{abstract}
	This paper presents a constrained-optimization formulation for the prioritized execution of learned robot tasks. The framework lends itself to the execution of tasks encoded by value functions, such as tasks learned using the reinforcement learning paradigm. The tasks are encoded as constraints of a convex optimization program by using control Lyapunov functions. Moreover, an additional constraint is enforced in order to specify relative priorities between the tasks. The proposed approach is showcased in simulation using a team of mobile robots executing coordinated multi-robot tasks.
	\keywords{Multi-robot motion coordination, Distributed control and planning, Learning and adaptation in teams of robots}
\end{abstract}

\section{Introduction}

Learning complex robotic tasks can be challenging for several reasons. The nature of compound tasks, made up of several simpler subtasks, renders it difficult to simultaneously capture and combine all features of the subtasks to be learned. Another limiting factor of the learning process of compound tasks is the computational complexity of machine learning algorithms employed in the learning phase. This can make the training phase prohibitive, especially when the representation of the tasks comprises of a large number of parameters, as it is generally the case when dealing with complex tasks made up of several subtasks, or in the case of high-dimensional state space representations.

For these reasons, when there is an effective way of combining the execution of multiple subtasks, it is useful to break down complex tasks into building blocks that can be independently learned in a more efficient fashion. Besides the reduced computational complexity stemming from the simpler nature of the subtasks to be learned, this approach has the benefit of increasing the modularity of the task execution framework, by allowing for a reuse of the subtasks as building blocks for the execution of different complex tasks. Discussions and analyses of such advantages can be found, for instance, in \cite{schwartz1995finding,ghosh2017divide,teh2017distral,nachum2018data}.

Along these lines, in \cite{kaelbling2020foundation}, \textit{compositionality} and \textit{incrementality} are recognized to be two fundamental features of robot learning algorithms. Compositionality, in the context of learning to execute multiple tasks, is intended as the property of learning strategies to be in a form that allows them to be combined with previous knowledge. Incrementality, guarantees the possibility of adding new knowledge and abilities over time, by, for instance, incorporating new tasks. Several approaches have been proposed, which exhibit these two properties. Nevertheless, challenges still remain regarding tasks prioritization and stability guarantees \cite{qureshi2019composing,sahni2017learning,singh1992transfer,van2019composing,dulac2019challenges}. The possibility of prioritizing tasks together with the stability guarantees allows us to characterize the behavior resulting from the composition of multiple tasks.

In fact, when dealing with redundant robotic systems---i.e. systems which possess more degrees of freedom compared to the minimum number required to execute a given task, as, for example, multi-robot systems---it is often useful to allow for the execution of multiple subtasks in a \textit{prioritized stack}. Task priorities may allow robots to adapt to the different scenarios in which they are employed by exhibiting structurally different behaviors. Therefore, it is desirable that a multi-task execution framework allows for the prioritized execution of multiple tasks.

In this paper, we present a constrained-optimization robot-control framework suitable for the \textit{stable} execution of multiple tasks in a \textit{prioritized} fashion. This approach leverages the reinforcement learning (RL) paradigm in order to get an approximation of the value functions which will be used to encode the tasks as constraints of a convex quadratic program (QP). Owing to its convexity, the latter can be solved in polynomial time \cite{boyd2004convex}, and it is therefore suitable to be employed in a large variety of robotic applications, in online settings, even under real-time constraints. The proposed framework shares the optimization-based nature with the one proposed in \cite{notomista2020set} for redundant robotic manipulators, where, however, it is assumed that a representation for all tasks to be executed is known a priori. As will be discussed later in the paper, this framework indeed combines compositionality and incrementality---i.e. the abilities of combining and adding sub-tasks to build up compound tasks, respectively---with stable and prioritized task execution in a computationally efficient optimization-based algorithm.

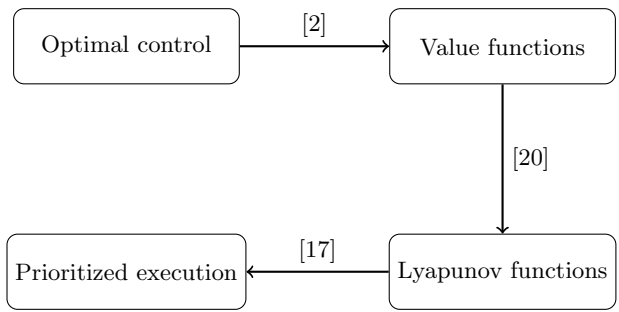
\begin{figure}
\label{fig:blockdiagram}
\centering
\begin{tikzpicture}
	\node (oc) [rectangle, rounded corners, minimum width=3cm, minimum height=1cm, text centered, draw=black] {Optimal control};
	\node (vf) [right of=oc, rectangle, rounded corners, minimum width=3cm, minimum height=1cm, text centered, draw=black, node distance=5cm] {Value functions};
	\node (lf) [below of=vf, rectangle, rounded corners, minimum width=3cm, minimum height=1cm, text centered, draw=black, node distance=3cm] {Lyapunov functions};
	\node (pe) [left of=lf, rectangle, rounded corners, minimum width=3cm, minimum height=1cm, text centered, draw=black, node distance=5cm] {Prioritized execution};
	\draw[->,thick] (oc) -- (vf) node [pos=0.5,above] {\cite{bertsekas2019reinforcement}};
	\draw[->,thick] (vf) -- (lf) node [pos=0.5,right] {\cite{primbs1999nonlinear}};
	\draw[->,thick] (lf) -- (pe) node [pos=0.5,above] {\cite{notomista2019optimal}};
\end{tikzpicture}
\caption{Pictorial representation of the strategy adopted in this paper for the execution of prioritized stacks of learned tasks.}
\end{figure}
Figure~\ref{fig:blockdiagram} pictorially shows the strategy adopted in this paper to allow robots to execute multiple prioritized tasks learned using the RL paradigm. Once a value function is learned using the RL paradigm (using, e.g., the value iteration algorithm \cite{bertsekas2019reinforcement}), this learned value function is used to construct a control Lyapunov function \cite{sontag1983lyapunov} in such a way that a controller synthesized using a min-norm optimization program is equivalent to the optimal policy corresponding to the value function \cite{primbs1999nonlinear}. Then, multiple tasks encoded by constraints in a min-norm controller are combined in a prioritized stack as in \cite{notomista2019optimal}.

To summarize, the contributions of this paper are the following: (i) We present a compositional and incremental framework for the execution of multiple tasks encoded by value functions; (ii) We show how priorities among tasks can be enforced in a constrained-optimization-based formulation; (iii) We frame the prioritized multi-task execution as a convex QP which can be efficiently solved in online settings; (iv) We demonstrate how the proposed framework can be employed to control robot teams to execute coordinated tasks.

\section{Background and Related Work}
\label{sec:relatedwork}

\subsection{Multi-Task Learning, Composition, and Execution}
\label{subsec:mtl}

The prioritized execution framework for learned tasks proposed in this paper can be related to approaches devised for multi-task learning---a machine learning paradigm which aims at leveraging useful information contained in multiple related tasks to help improve the generalization performance of all the tasks \cite{zhang2021survey}. The learning of multiple tasks can happen in parallel (independently) or in sequence for naturally sequential tasks \cite{gupta2021reset,smith2017federated}, and a number of computational frameworks have been proposed to learn multiple tasks (see, e.g., \cite{zhang2021survey,micchelli2004kernels,ruder2017overview}, and references therein). It is worth noticing how, owing to its constrained-optimization nature, the approach proposed in this paper is dual to multi-objective optimization frameworks, such as \cite{sener2018multi,bylard2021composable} or compared to the Riemannian motion policies \cite{ratliff2018riemannian,mukadam2020riemannian,rana2020learning}.

Several works have focused on the composition and hierarchy of deep reinforcement learning policies. The seminal work \cite{todorov2009compositionality} shows compositionality for a specific class of value functions. More general value functions are considered in \cite{haarnoja2017reinforcement}, where, however, there are no guarantees on the policy resulting from the multi-task learning process. Boolean and weighted composition of reward, \mbox{(Q-)value} functions, or policies are considered in \cite{haarnoja2018composable,peng2019mcp,van2019composing}. While these works have shown their effectiveness on complex systems and tasks, our proposed approach differs from them in two main aspects: (i) It separates the task learning from the task composition; (ii) It allows for (possibly time-varying and state-dependent) task prioritization, with task stacks that are enforced at runtime.

\subsection{Constraint-Based Task Execution}
\label{subsect:cbc}

In this paper, we adopt a constrained-optimization approach to the prioritized execution of multiple tasks learned using the RL paradigm. In \cite{notomista2019optimal}, a constraint-based task execution framework is presented for a robotic system with control affine dynamics
\begin{equation}
	\label{eq:dyn}
	\dot x = f_0(x) + f_1(x) u,
\end{equation}
where $x\in\msc X\subseteq \R^n$ and $u\in\msc U\subseteq \R^m$ denote state and control input, respectively. The $M$ tasks to be executed are encoded by continuously differentiable, positive definite cost functions $V_i\colon\msc X\to\R_+,~i=1,\ldots,M$. With the notation which will be adopted in this paper, the constraint-based task execution framework in \cite{notomista2019optimal} can be expressed as follows:
\begin{equation}
	\label{eq:multitaskexecution}
	\begin{aligned}
		\minimize_{u,\delta} & \|u\|^2 + \kappa \|\delta\|^2\\
		\subjto & L_{f_0} V_i(x) + L_{f_1} V_i(x) u \le -\gamma(V_i(x)) + \delta_i \quad i=1,\ldots,M\\
		& K \delta \ge 0,
	\end{aligned}
\end{equation}
where $L_{f_0} V_i(x)$ and $L_{f_1} V_i(x)$ are the Lie derivatives of $V_i$ along the vector fields $f_0$ and $f_1$, respectively. The components of $\delta = [\delta_1,\ldots,\delta_M]\tr$ are used as slack variables employed to prioritize the different tasks; $\gamma\colon\R\to\R$ is a Lipschitz continuous extended class $\mc K$ function---i.e. a continuous, monotonically increasing function, with $\gamma(0) = 0$---$\kappa>0$ is an optimization parameter, and $K$ is the \textit{prioritization matrix}, known a priori, which enforces relative constraints between components of $\delta$ of the following type: $\delta_i \le l \delta_j$, for $l\ll 1$, which encodes the fact that task $i$ is executed at higher priority than task $j$.

In the following, Section~\ref{subsec:dpconstraints} will be devoted to showing the connection between dynamic programming and optimization-based controllers. In Section~\ref{sec:main}, this connection will allow us to execute tasks learned using the RL paradigm by means of a formulation akin to \eqref{eq:multitaskexecution}.

\subsection{From Dynamic Programming to Constraint-Driven Control}
\label{subsec:dpconstraints}

To illustrate how controllers obtained using dynamic programming can be synthesized as the solution of an optimization program, consider a system with the following discrete-time dynamics:
\begin{equation}
	\label{eq:discretedyn}
	x_{k+1} = f(x_k,u_k).
\end{equation}
These dynamics can be obtained, for instance, by \eqref{eq:dyn}, through a discretization process. In \eqref{eq:discretedyn}, $x_k$ denotes the state, $u_k\in\msc U_k(x_k)$ the input, and the input set $\msc U_k(x_k)$ may depend in general on the time $k$ and the state $x_k$. The value iteration algorithm to solve a deterministic dynamic programming problem with no terminal cost can be stated as follows \cite{bertsekas2019reinforcement}:
\begin{equation}
	\label{eq:VI}
	J_{k+1}(x_k) = \min_{u_k\in \msc U_k(x_k)} \bigg\{ g_k(x_k,u_k) + J_k(f_k(x_k,u_k)) \bigg\},
\end{equation}
with $J_0(x_0) = 0$, where $x_0$ is the initial state, and $g_k(x_k,u_k)$ is the cost incurred at time $k$. The total cost accumulated along the system trajectory is given by
\begin{equation}
	\label{eq:costDP}
	J(x_0) = \lim_{N\to\infty} \sum_{k=0}^{N-1} \alpha^k g_k(x_k,u_k).
\end{equation}
In this paper, we will consider $\alpha = 1$ and we will assume there exists a cost-free termination state.\footnote{Problems of this class are referred to as shortest path problems in \cite{bertsekas2019reinforcement}}.

By Proposition 4.2.1 in \cite{bertsekas2019reinforcement} the value iteration algorithm \eqref{eq:VI} converges to $J^\star$ satisfying
\begin{equation}
	\label{eq:Jstar}
	J^\star(x) = \min_{u\in\msc U(x)} \bigg\{ g(x,u) + J^\star(f(x,u)) \bigg\}.
\end{equation}

Adopting an approximation scheme in value space, $J^\star$ can be replaced by its approximation $\tilde J^\star$ by solving the following approximate dynamic programming algorithm:
\begin{equation}
	\nonumber
	\tilde J_{k+1}(x_k) = \min_{u_k\in \msc U_k(x_k)} \bigg\{ g_k(x_k,u_k) + \tilde J_k(f_k(x_k,u_k)) \bigg\}.
\end{equation}
In these settings, deep RL algorithms can be leveraged to find parametric approximations, $\tilde J^\star$, of the value function using neural networks. This will be the paradigm considered in this paper in order to approximate value functions encoding the tasks to be executed in a prioritized fashion.

The bridge between dynamic programming and constraint-driven control is optimal control. In fact, the cost in \eqref{eq:costDP} is typically considered in optimal control problems, recalled, in the following, for the continuous time control affine system \eqref{eq:dyn}:
\begin{equation}
	\label{eq:optimalcontrolproblem}
	\begin{aligned}
		\minimize_{u(\cdot)} &\int_0^\infty \left( q(x(t)) + u(t)\tr u(t) \right) \di t\\
		\subjto & \dot x = f_0(x) + f_1(x) u.
	\end{aligned}
\end{equation}
Comparing \eqref{eq:optimalcontrolproblem} with \eqref{eq:costDP}, we recognize that the instantaneous cost $g(x,u)$ in \eqref{eq:costDP} in the context of the optimal control problem \eqref{eq:optimalcontrolproblem} corresponds to $q(x) + u\tr u$, where $q\colon\msc X\to\R$ is a continuously differentiable and positive definite function.

A dynamic programming argument on \eqref{eq:optimalcontrolproblem} leads to the following Hamilton-Jacobi-Bellman equation:
\begin{equation}
	\nonumber
	L_{f_0} J^\star(x) - \frac{1}{4} L_{f_1} J^\star(x) \left(L_{f_1} J^\star(x) \right)\tr + q(x) = 0,
\end{equation}
where $J^*$ is the value function---similar to \eqref{eq:Jstar} for continuous-time problems---representing the minimum cost-to-go from state $x$, defined as
\begin{equation}
	\label{eq:optimalcosttogo}
	J^\star(x) = \min_{u(\cdot)} \int_t^\infty \left( q(x(\tau)) + u(\tau)\tr u(\tau) \right) \di\tau.
\end{equation}
The optimal policy corresponding to the optimal value function \eqref{eq:optimalcosttogo} can be evaluated as follows \cite{bryson2018applied}:
\begin{equation}
	\label{eq:optimalpolicy}
	u^\star = -\frac{1}{2} \left(L_{f_1} J^\star(x)\right)\tr.
\end{equation}

In order to show how the optimal policy $u^\star$ in \eqref{eq:optimalpolicy} can be obtained using an optimization-based formulation, we now recall the concept of control Lyapunov functions.

\begin{definition}[Control Lyapunov function \cite{sontag1983lyapunov}]
	\label{def:clf}
	A continuously differentiable, positive definite function $V\colon\R^n\to\R$ is a control Lyapunov function (CLF) for the system \eqref{eq:dyn} if, for all $x\neq0$
	\begin{equation}
		\label{eq:clfcondition}
		\inf_u \bigg\{ L_{f_0} V(x) + L_{f_1} V(x) u \bigg\} < 0.
	\end{equation}
\end{definition}

To select a control input $u$ which satisfies the inequality \eqref{eq:clfcondition}, a universal expression---known as the Sontag's formula \cite{sontag1989universal}---can be employed. With the aim of encoding the optimal control input $u^\star$ by means of a CLF, we will consider the following modified Sontag's formula originally proposed in \cite{freeman1996control}:
\begin{equation}
	\label{eq:sontagmodified}
	u(x) = \begin{cases}
		-v(x) \left(L_{f_1} V(x) \right)\tr & \mathrm{if}~L_{f_1} V(x) \neq 0\\
		0 & \mathrm{otherwise},\\
	\end{cases}
\end{equation}
where $v(x) = \frac{L_{f_0} V(x) + \sqrt{\left(L_{f_0} V(x)\right)^2 + q(x) L_{f_1} V(x) \left(L_{f_1} V(x) \right)\tr}}{L_{f_1} V(x) \left(L_{f_1} V(x) \right)\tr}$.

As shown in \cite{primbs1999nonlinear}, the modified Sontag's formula \eqref{eq:sontagmodified} is equivalent to the solution of the optimal control problem \eqref{eq:optimalcontrolproblem} if the following relation between the CLF $V$ and the value function $J^\star$ holds:
\begin{equation}
	\label{eq:CLFoptimal}
	\frac{\partial J^\star}{\partial x} = \lambda(x) \frac{\partial V}{\partial x},
\end{equation}
where $\lambda(x) = 2 v(x) \left(L_{f_1} V(x) \right)\tr$.
The relation in \eqref{eq:CLFoptimal} corresponds to the fact that the level sets of the CLF $V$ and those of the value function $J^\star$ are parallel.

The last step towards the constrained-optimization-based approach to generate optimal control policies is to recognize the fact that, owing to its inverse optimality property \cite{freeman1996control}, the modified Sontag's formula \eqref{eq:sontagmodified} can be obtained using the following constrained-optimization formulation, also known as the pointwise min-norm controller:
\begin{equation}
	\label{eq:minnorm}
	\begin{aligned}
		\minimize_{u} & \|u\|^2\\
		\subjto & L_{f_0} V(x) + L_{f_1} V(x) u \le -\sigma(x),
	\end{aligned}
\end{equation}
where $\sigma(x) = \sqrt{\left(L_{f_0} V(x)\right)^2 + q(x) L_{f_1} V(x) \left(L_{f_1} V(x) \right)\tr}$. This formulation shares the same optimization structure with the one introduced in \eqref{eq:multitaskexecution} in Section~\ref{sec:relatedwork}, and in the next section we will provide a formulation which strengthens the connection with approximate dynamic programming.

In Appendix~\ref{app:equivalence}, additional results are reported, which further illustrate the theoretical equivalence discussed in this section, by comparing the optimal controller, the optimization-based controller, and a policy learned using the RL framework for a simple dynamical system.

\section{Prioritized Multi-Task Execution}
\label{sec:main}

When $V = \tilde J^\star$, the min-norm controller solution of \eqref{eq:minnorm} is the optimal policy which would be learned using a deep RL algorithm. This is what allows us to bridge the gap between constraint-driven control and RL and it is the key to execute tasks learned using the RL paradigm in a compositional, incremental, prioritized, and computationally-efficient fashion.

Following the formulation given in \eqref{eq:multitaskexecution}, the multi-task prioritized execution of tasks learned using RL can be implemented executing the control input solution of the following optimization program:
\begin{equation}
	\label{eq:multiRLtaskexecution}
	\begin{aligned}
		\minimize_{u,\delta} & \|u\|^2 + \kappa \|\delta\|^2\\
		\subjto & \frac{1}{\lambda_i(x)} \left(L_{f_0} \tilde J_i^\star(x) + L_{f_1} \tilde J_i^\star(x) u \right) \le -\sigma_i(x) + \delta_i,\quad i=1,\ldots,M\\
		& K \delta \ge 0
	\end{aligned}
\end{equation}
where $\tilde J_1^\star,\ldots,\tilde J_M^\star$ are the approximated value functions encoding the tasks learned using the RL paradigm (e.g. value iteration). In summary, with the RL paradigm, one can get the approximate value functions $\tilde J_1^\star,\ldots,\tilde J_M^\star$; the robotic system is then controlled using the control input solution of \eqref{eq:multiRLtaskexecution} in order to execute these tasks in a prioritized fashion.

\begin{remark}
	The Lie derivatives $L_{f_0} \tilde J_1^\star(x),\ldots,L_{f_0} \tilde J_M^\star(x)$ contain the gradients $\frac{\partial J_1^\star}{\partial x},\ldots,\frac{\partial J_M^\star}{\partial x}$. When $\tilde J_1^\star(x),\ldots,\tilde J_M^\star(x)$ are approximated using neural networks, these gradients can be efficiently computed using back propagation.
\end{remark}

We conclude this section with the following Proposition~\ref{prop:stability}, which ensures the stability of the prioritized execution of multiple tasks encoded through the value functions $\tilde J_i^\star$ by a robotic system modeled by the dynamics \eqref{eq:dyn} and controlled with control input solution of \eqref{eq:multiRLtaskexecution}.

\begin{proposition}[Stability of multiple prioritized learned tasks]
	\label{prop:stability}
	Consider executing a set of $M$ prioritized tasks encoded by approximate value functions $\tilde J^\star_i,~i=1,\ldots,M$, by solving the optimization problem in~\eqref{eq:multiRLtaskexecution}. Assume the following:
	\begin{enumerate}
		\item All constraints in \eqref{eq:multiRLtaskexecution} are active
		\item The robotic system can be modeled by driftless control affine dynamical system, i.e. $f_0(x)=0~\forall x\in\msc X$
		\item The instantaneous cost function $g$ used to learn the tasks is positive for all $x\in\msc X$ and $u\in\msc U$.
	\end{enumerate}
	Then,
	\begin{equation}
		\nonumber
		\begin{bmatrix}
			\tilde J_1^\star(x(t))\\
			\vdots\\
			\tilde J_M^\star(x(t))
		\end{bmatrix}\to \mc N(K),
	\end{equation}
	as $t\to\infty$, where $\mc N(K)$ denotes the null space of the prioritization matrix $K$. That is, the tasks will be executed according to the priorities specified by the prioritization matrix $K$ in \eqref{eq:multitaskexecution}.
\end{proposition}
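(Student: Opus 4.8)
The plan is to follow the vector $\Phi(t) = [\tilde J^\star_1(x(t)),\ldots,\tilde J^\star_M(x(t))]\tr$ of learned value functions evaluated along the closed-loop trajectory and to produce a Lyapunov-type function measuring its distance from $\mc N(K)$, closing the argument with LaSalle's invariance principle. First I would derive the closed-loop dynamics of $\Phi$. Since $\frac{\di}{\di t}\tilde J^\star_i(x(t)) = L_{f_0}\tilde J^\star_i(x) + L_{f_1}\tilde J^\star_i(x)\,u$, assumption~(2) ($f_0\equiv0$) and assumption~(1) (every task constraint is active) give, with $(u,\delta)$ the minimizer of \eqref{eq:multiRLtaskexecution}, the identity $\tfrac{1}{\lambda_i(x)}\tfrac{\di}{\di t}\tilde J^\star_i(x(t)) = \delta_i - \sigma_i(x)$, that is
\begin{equation}
\nonumber
\dot\Phi = \Lambda(x)\big(\delta - \sigma(x)\big),\qquad \Lambda(x) = \mathrm{diag}\big(\lambda_1(x),\ldots,\lambda_M(x)\big)\succ 0,
\end{equation}
where $\lambda_i(x)>0$ follows from \eqref{eq:CLFoptimal}, and, by assumption~(3) together with $f_0\equiv0$, $\sigma_i(x) = \sqrt{q_i(x)\,L_{f_1}\tilde J^\star_i(x)\big(L_{f_1}\tilde J^\star_i(x)\big)\tr}\ge 0$ with $\sigma_i(x)=0$ only at a task equilibrium $L_{f_1}\tilde J^\star_i(x)=0$.

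Next I would introduce the orthogonal projector $\Pi$ onto $\mc N(K)$ (so $\Phi\in\mc N(K)$ iff $(I-\Pi)\Phi=0$) and the candidate $W = \tfrac12\|(I-\Pi)\Phi\|^2\ge 0$, whose derivative along the trajectory is $\dot W = \big((I-\Pi)\Phi\big)\tr\Lambda(x)\,\delta - \big((I-\Pi)\Phi\big)\tr\Lambda(x)\,\sigma(x)$. To control the first term I would use the KKT stationarity conditions of the convex QP \eqref{eq:multiRLtaskexecution}: with $\mu\ge 0$ and $\nu\ge 0$ the multipliers of the active task constraints and of $K\delta\ge 0$, these read $u = -\tfrac12 B(x)\tr\Lambda(x)\inv\mu$ and $2\kappa\,\delta = \mu + K\tr\nu$, where $B(x)$ is the $M\times m$ matrix with rows $L_{f_1}\tilde J^\star_i(x)$; in particular $\dot\Phi = B(x)u = -\tfrac12 B(x)B(x)\tr\Lambda(x)\inv\mu$, which re-expresses $\delta - \sigma(x) = \Lambda(x)\inv\dot\Phi$.

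The decisive step is to show $\dot W\le 0$ with equality only on $\mc N(K)$. Substituting $\delta = \tfrac{1}{2\kappa}(\mu+K\tr\nu)$ into $\dot W$, the part along $K\tr\nu\in\mc N(K)^\perp$ is disposed of through the complementarity $\nu\tr(K\delta)=0$ together with $K\delta\ge0$; the part along $\mu$ is handled by $\mu\ge0$ and $B(x)B(x)\tr\succeq0$; and the $-\Lambda(x)\sigma(x)$ contribution is dissipative because, by assumption~(3), each $\sigma_i\ge0$ acts to decrease $\tilde J^\star_i$ and so cannot push $(I-\Pi)\Phi$ outward. With $\dot W\le0$ the function $W$ is nonincreasing, hence the trajectory of $\Phi$ is bounded, and LaSalle's invariance principle yields convergence to the largest invariant set in $\{\dot W=0\}$; since $\dot W<0$ whenever $(I-\Pi)\Phi\neq0$ away from task equilibria, this set lies in $\Phi^{-1}(\mc N(K))$, i.e. $\Phi(t)\to\mc N(K)$, which is the claim. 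I expect the main obstacle to be exactly this dissipativity estimate: obtaining a clean sign for $\dot W$ despite the \emph{state-dependent}, generally non-diagonal Gram matrix $B(x)B(x)\tr$ and the coupling of the two families of multipliers $\mu$ and $\nu$. Assumptions~(1) and~(3) are precisely what make it go through---activity of all task constraints removes the case analysis over which constraints bind and pins $\dot\Phi$ down as above, while positivity of the running cost $g$, hence of the $\sigma_i$ away from task equilibria, supplies the dissipation that forces $(I-\Pi)\Phi\to0$.
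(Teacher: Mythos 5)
Your setup is sound and close in spirit to the paper's: the identity $\frac{\di}{\di t}\tilde J_i^\star = \lambda_i(x)(\delta_i-\sigma_i(x))$, obtained from activity of the constraints and $f_0\equiv 0$, is exactly the closed-loop relation the paper exploits, and your candidate $W=\tfrac12\|(I-\Pi)\Phi\|^2$ plays the same role as the paper's $V=\tfrac12\tilde J^{\star T}K\tr K\tilde J^\star$ (both vanish precisely on $\mc N(K)$). The problem is that your ``decisive step'' does not go through as described. Complementary slackness gives $\nu\tr K\delta=0$; the cross term you need to kill in $\dot W$ is $\bigl((I-\Pi)\Phi\bigr)\tr\Lambda(x)K\tr\nu$, which involves the value-function vector $\Phi$, not the slack vector $\delta$ (and has $\Lambda$ sandwiched inside), so complementarity says nothing about it. Likewise, $(I-\Pi)\Phi$ has no componentwise sign, so $\mu\ge0$ gives no sign on $\bigl((I-\Pi)\Phi\bigr)\tr\Lambda\mu$, and $\sigma\ge0$ gives no sign on $-\bigl((I-\Pi)\Phi\bigr)\tr\Lambda\sigma$: decreasing every $\tilde J_i^\star$ does not decrease the distance of $\Phi$ to an arbitrary subspace. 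The obstacle you flagged at the end is real, and the assumptions you invoke do not by themselves resolve it.

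What the paper does to close this gap is solve the KKT system explicitly under assumption~1: inverting the full dual system gives $u=-\begin{bmatrix}\hat f_1(x)\tr & 0\end{bmatrix}A_1\inv b_0$ with $b_0=[\sigma(x)\tr,0]\tr$ (after using $f_0\equiv0$), so that $\dot V=-\tilde J^{\star T}K\tr K\Lambda(x)\hat A\,[\sigma(x)\tr,0]\tr$ where $\hat A=\hat f_1(x)\begin{bmatrix}\hat f_1(x)\tr&0\end{bmatrix}A_1\inv\succeq0$. That positive semidefiniteness of the composite matrix $\hat A$---imported as a lemma from Proposition~3 of \cite{notomista2019optimal}---is precisely the structural fact that disposes of the coupling between the Gram matrix $\hat f_1\hat f_1\tr$ and the two families of multipliers that you could not untangle. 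The second ingredient you are missing is the step relating $\sigma$ back to $\tilde J^\star$ itself: assumption~3 makes the $\tilde J_i^\star$ positive definite, whence $\sigma(x)=\gamma_J(\tilde J^\star(x))$ for a class $\mc K$ function $\gamma_J$, which turns the estimate into $\dot V\le-\bar\lambda V$ and yields convergence. Without such a bound your argument could at best give $\dot W\le0$ and boundedness, and your LaSalle step would still need to rule out invariant sets where $\dot W=0$ off $\mc N(K)$ (e.g.\ at points where $L_{f_1}\tilde J_i^\star=0$ with $\tilde J_i^\star\neq0$), which you have not done.
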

\begin{proof}
	The Lagrangian associated with the optimization problem \eqref{eq:multiRLtaskexecution} is given by
	$L(u,\delta) = \|u\|^2 + \kappa \|\delta\|^2 + \eta_1\tr \left( \hat f_0(x) + \hat f_1(x) u + \sigma(x) - \delta \right) + \eta_2\tr (-K\delta)$,
	where $\hat f_0(x)\in\R^M$ and $\hat f_1(x)\in\R^{M\times m}$ defined as follows: the $i$-th component of $\hat f_0(x)$ is equal to $\frac{1}{\lambda_i(x)} L_{f_0} \tilde J_i^\star(x)$, while the $i$-th row of $\hat f_i(x)$ is equal to $\frac{1}{\lambda_i(x)} L_{f_1} \tilde J_i^\star(x)$. $\eta_1$ and $\eta_2$ are the Lagrange multipliers corresponding to the task and prioritization constraints, respectively.
	
	From the KKT conditions, we obtain:
	\begin{equation}
		\label{eq:proof:udelta}
		u = -\frac{1}{2}\begin{bmatrix}\hat f_1(x)\tr & 0\end{bmatrix}\eta\qquad
		\delta = \frac{1}{2\kappa}\begin{bmatrix}I & -K\tr\end{bmatrix}\eta,
	\end{equation}
	where $\eta = [\eta_1\tr, \eta_2\tr]\tr$. By resorting to the Lagrange dual problem, and by using assumption 1, we get the following expression for $\eta$:
	\begin{equation}
		\label{eq:proof:eta}
		\eta = 2 \underbrace{\begin{bmatrix}
				\frac{I}{\kappa}+\hat f_1(x) \hat f_1(x)\tr & -K\tr \\
				K & K K\tr
			\end{bmatrix}\inv}_{A_1\inv} \underbrace{\begin{bmatrix}
				\hat f_0(x) + \sigma(x)\\
				0
		\end{bmatrix}}_{b_0},
	\end{equation}
	where $I$ denotes an identity matrix of appropriate size. Substituting \eqref{eq:proof:eta} in \eqref{eq:proof:udelta}, we get $u = -\begin{bmatrix}\hat f_1(x)\tr & 0\end{bmatrix} A_1\inv b_0$ and $\delta = \frac{1}{\kappa}\begin{bmatrix}I & -K\tr\end{bmatrix} A_1\inv b_0$.
	
	To show the claimed stability property, we will proceed by a Lyapunov argument. Let us consider the Lyapunov function candidate
	$V(x) = \frac{1}{2} \tilde J(x)^{\star T} K\tr K \tilde J^\star(x)$,
	where $\tilde J^\star(x) = \begin{bmatrix}
		\tilde J_1^\star(x) & \ldots & \tilde J_M^\star(x)
	\end{bmatrix}\tr$.
	The time derivative of $V$ evaluates to:
	\begin{equation}
		\nonumber
		\begin{aligned}
			\dot V &= \frac{\partial V}{\partial x} \dot x = \tilde J(x)^{\star T} K\tr K \frac{\partial \tilde J^\star}{\partial x} \dot x\\
			&= \tilde J(x)^{\star T} K\tr K \underbrace{\frac{\partial \tilde J^\star}{\partial x} f_1(x)}_{\hat f_{1,\lambda}(x)} u \qquad\text{(by assumption 2)},
		\end{aligned}
	\end{equation}
	where, notice that $\hat f_{1,\lambda}(x) = \Lambda(x) \hat f_{1}(x) \in\R^M$ and $\Lambda(x) = \mathrm{diag}\left( \left[ \lambda_1(x), \ldots, \lambda_M(x) \right] \right)$. By assumption 2, $\lambda_i(x)\ge0$ for all $i$, and therefore $\Lambda(x)\succeq0$, i.e. $\Lambda(x)$ is positive semidefinite. Then, $\dot V = \tilde J(x)^{\star T} K\tr K \Lambda(x) \hat f_{1}(x) u = -\tilde J(x)^{\star T} K\tr K \Lambda(x) \hat A \begin{bmatrix}
		\sigma(x)\\
		0
	\end{bmatrix}$,
	where
	\begin{equation}
		\hat A = \hat f_{1}(x) \begin{bmatrix}\hat f_1(x)\tr & 0\end{bmatrix} \begin{bmatrix}
			\frac{I}{\kappa}+\hat f_1(x) \hat f_1(x)\tr & -K\tr \\
			K & K K\tr
		\end{bmatrix}\inv \succeq0
	\end{equation}
	as in Proposition 3 in \cite{notomista2019optimal}, and we used assumption 2 to simplify the expression of $b_0$.
	
	By assumption 3, it follows that the value functions $\tilde J^\star_i$ are positive definite. Therefore, from the definition of $\sigma$, in a neighborhood of $0\in\R^M$, we can bound $\sigma(x)$---defined by the gradients of $\tilde J^\star$---by the value of $\tilde J^\star$ as $\sigma(x) = \gamma_J (\tilde J^\star(x))$, where $\gamma_J$ is a class $\mc K$ function.
	
	Then, proceeding similarly to Proposition 3 in \cite{notomista2019optimal}, we can bound $\dot V$ as follows:
	$\dot V = -\tilde J(x)^{\star T} K\tr K \Lambda(x) \hat A \gamma_J(\tilde J(x)) \le -\tilde J(x)^{\star T} K\tr K \Lambda(x) \tilde J(x) \le -\bar\lambda V(x)$,
	where $\bar\lambda = \min\{\lambda_1(x), \ldots, \lambda_M(x) \}$. Hence, $K \tilde J^\star(x(t)) \to 0$ as $t\to\infty$, and $\tilde J^\star(x(t)) \to \msc N(K)$ as $t\to\infty$.
\end{proof}

\begin{remark}
	The proof of Proposition~\ref{prop:stability} can be carried out even in case of time-varying and state-dependent prioritization matrix $K$. Under the assumption that $K$ is bounded and continuously differentiable for all $x$ and uniformly in time, the norm and the gradient of $K$ can be bounded in order to obtain an upper bound for $\dot V$.
\end{remark}

\begin{remark}
	Even when the prioritization stack specified through the matrix $K$ in \eqref{eq:multiRLtaskexecution} is not physically realizable---due to the fact that, for instance, the functions encoding the tasks cannot achieve the relative values prescribed by the prioritization matrix---the optimization program will still be feasible. Nevertheless, the tasks will not be executed with the desired priorities and even the execution of high-priority tasks might be degraded.
\end{remark}

\section{Experimental Results}
\label{sec:result}

In this section, the proposed framework for the execution of prioritized stacks of tasks is showcased in simulation using a team of mobile robots. Owing to the multitude of robotic units of which they are comprised, multi-robot systems are often highly redundant with respect to the tasks they have to execute. Therefore, they perfectly lend themselves to the concurrent execution of multiple prioritized tasks.

\subsection{Multi-Robot Tasks}
\label{subsec:multirobot}

\begin{figure*}
	\centering
	\subfloat[][t = 0s]{\label{subfig:multi1}\includegraphics[width=0.245\textwidth]{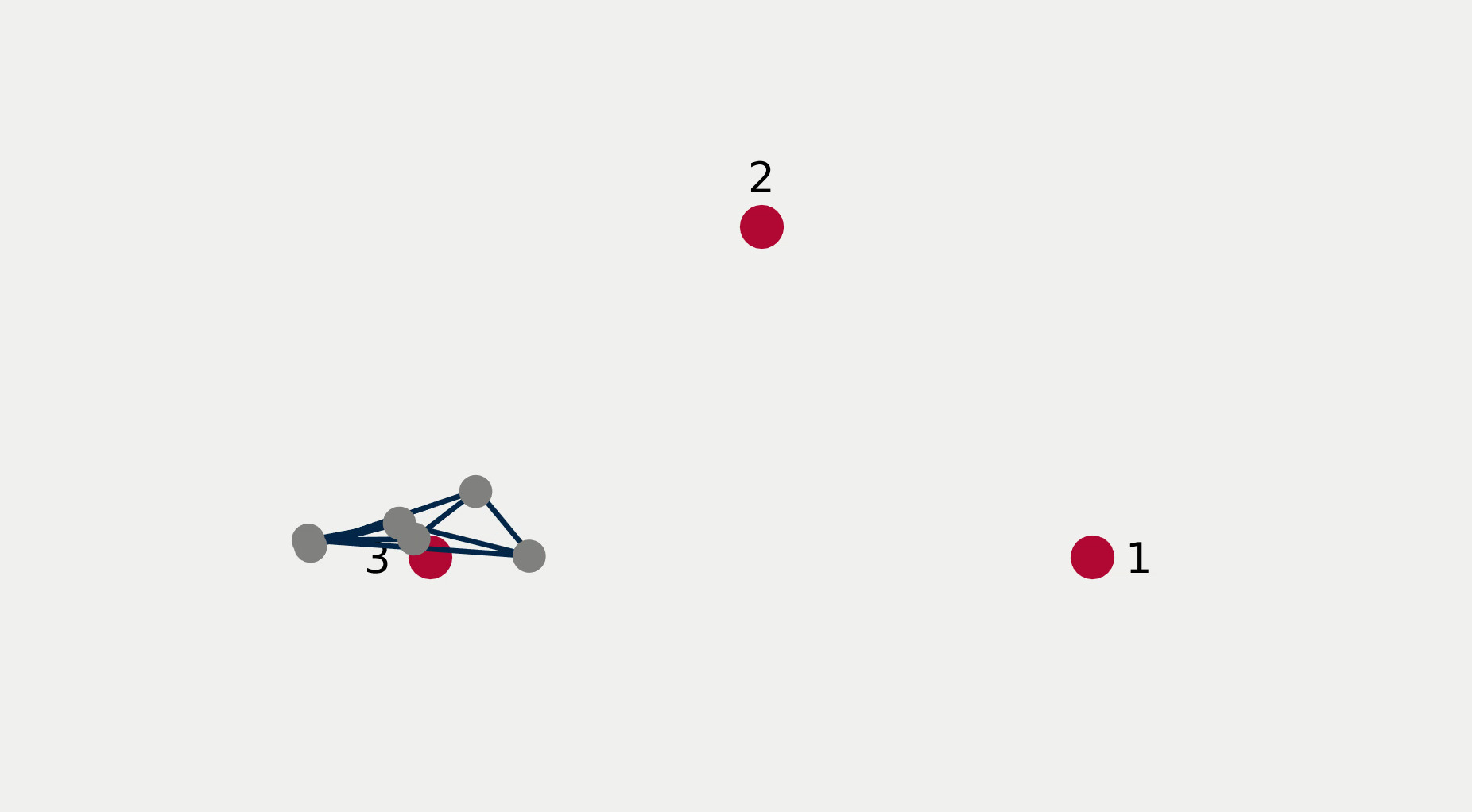}}\hfill
	\subfloat[][t = 5s]{\includegraphics[width=0.245\textwidth]{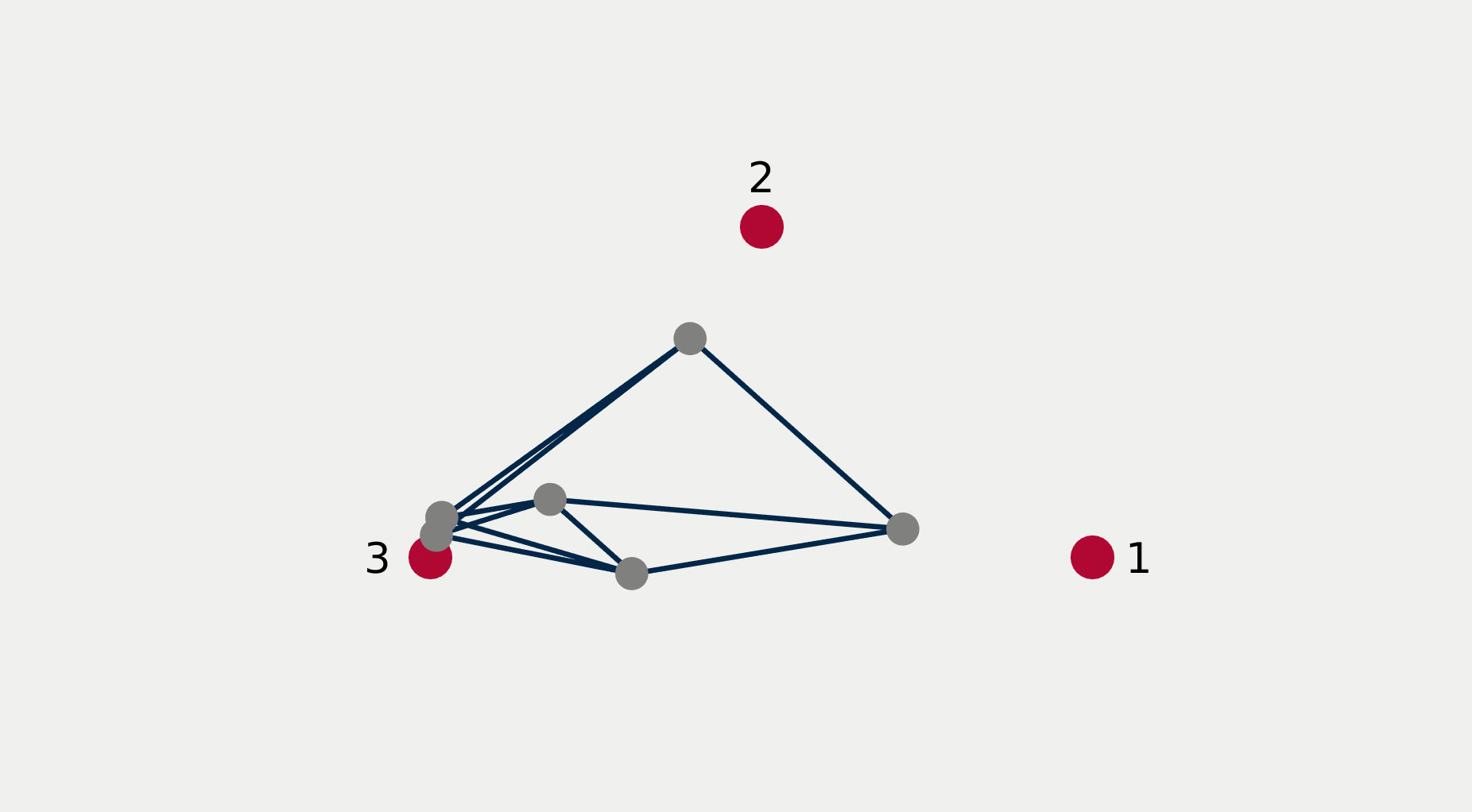}}\hfill
	\subfloat[][t = 10s]{\includegraphics[width=0.245\textwidth]{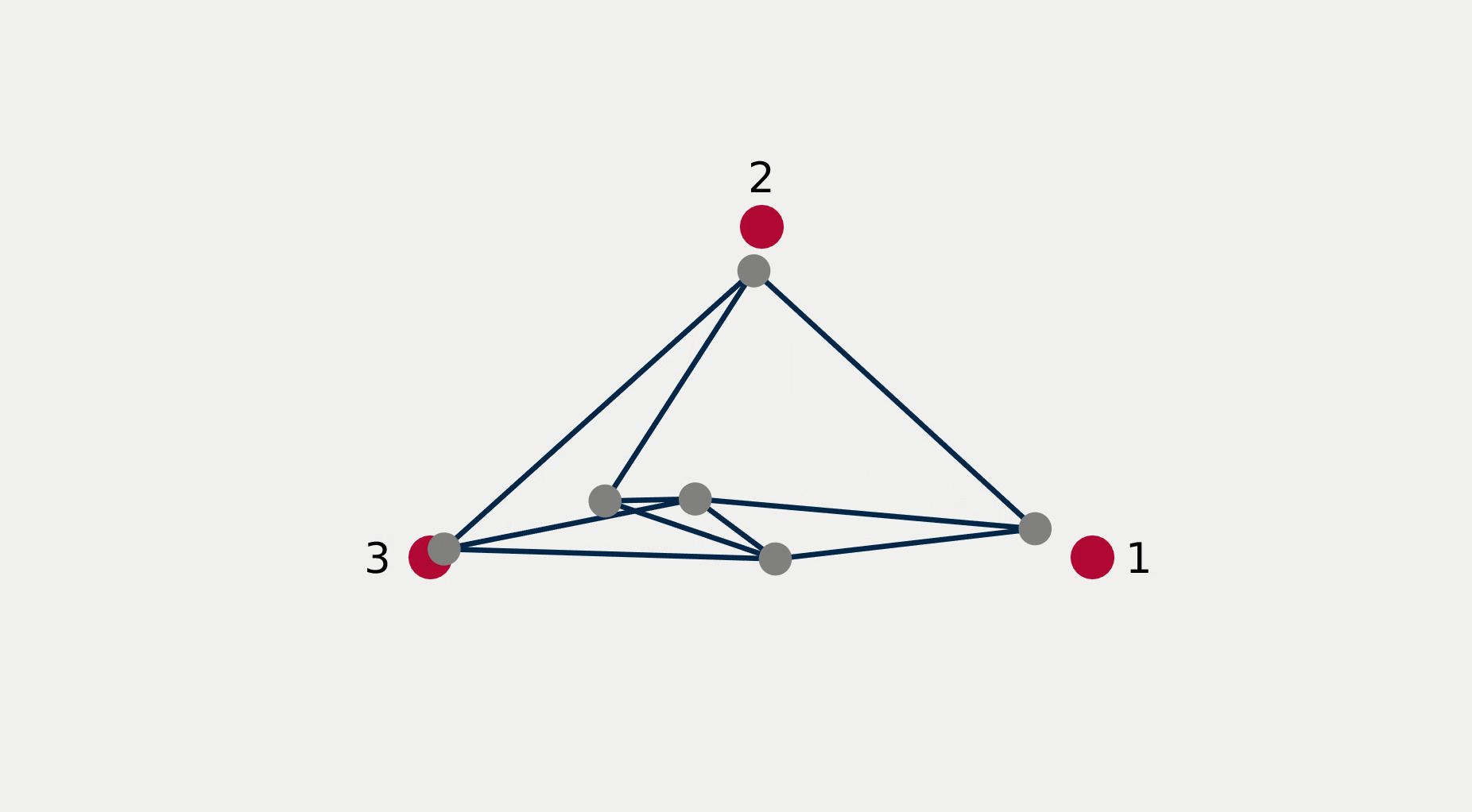}}\hfill
	\subfloat[][t = 15s]{\includegraphics[width=0.245\textwidth]{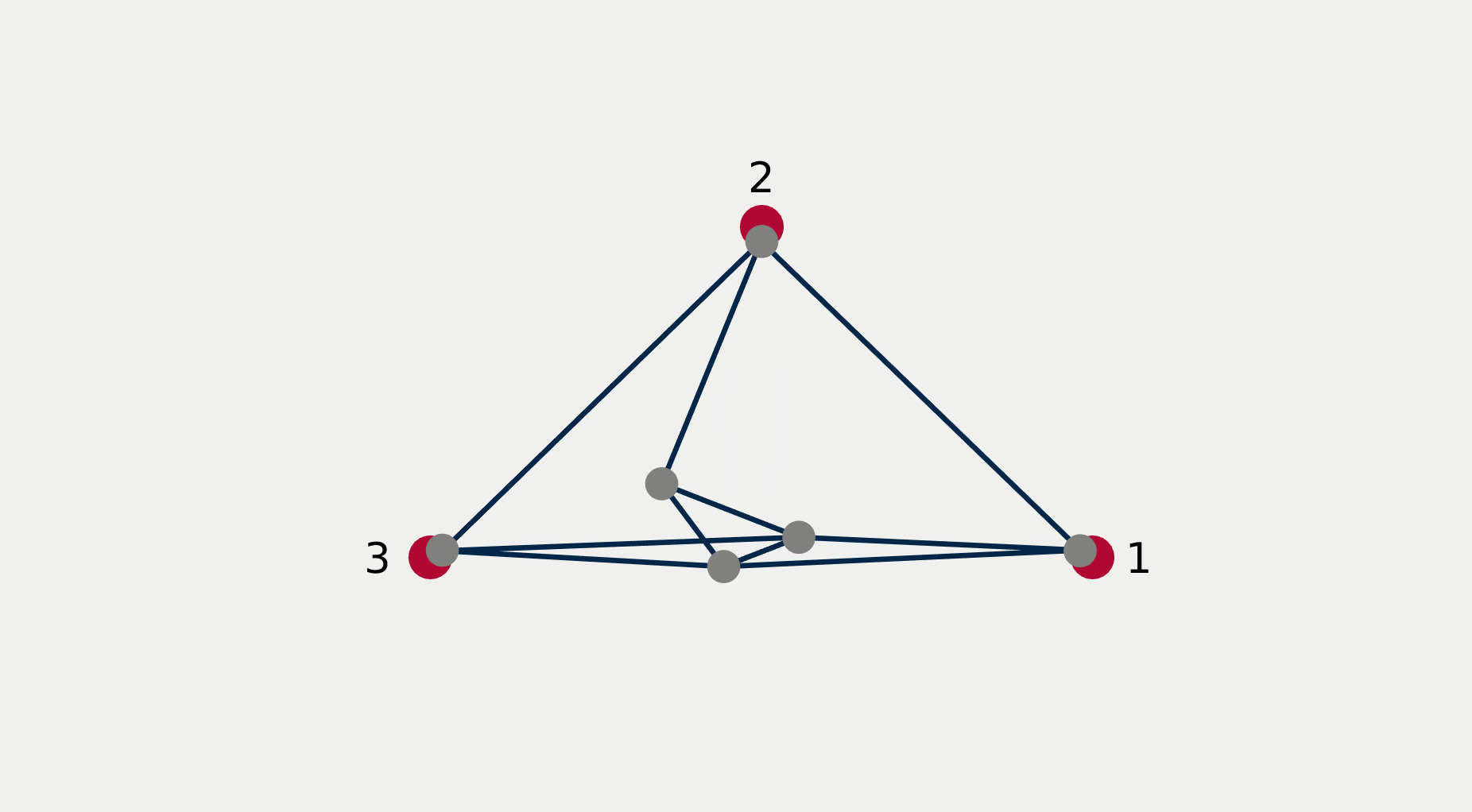}}\\
	\subfloat[][t = 19s]{\includegraphics[width=0.245\textwidth]{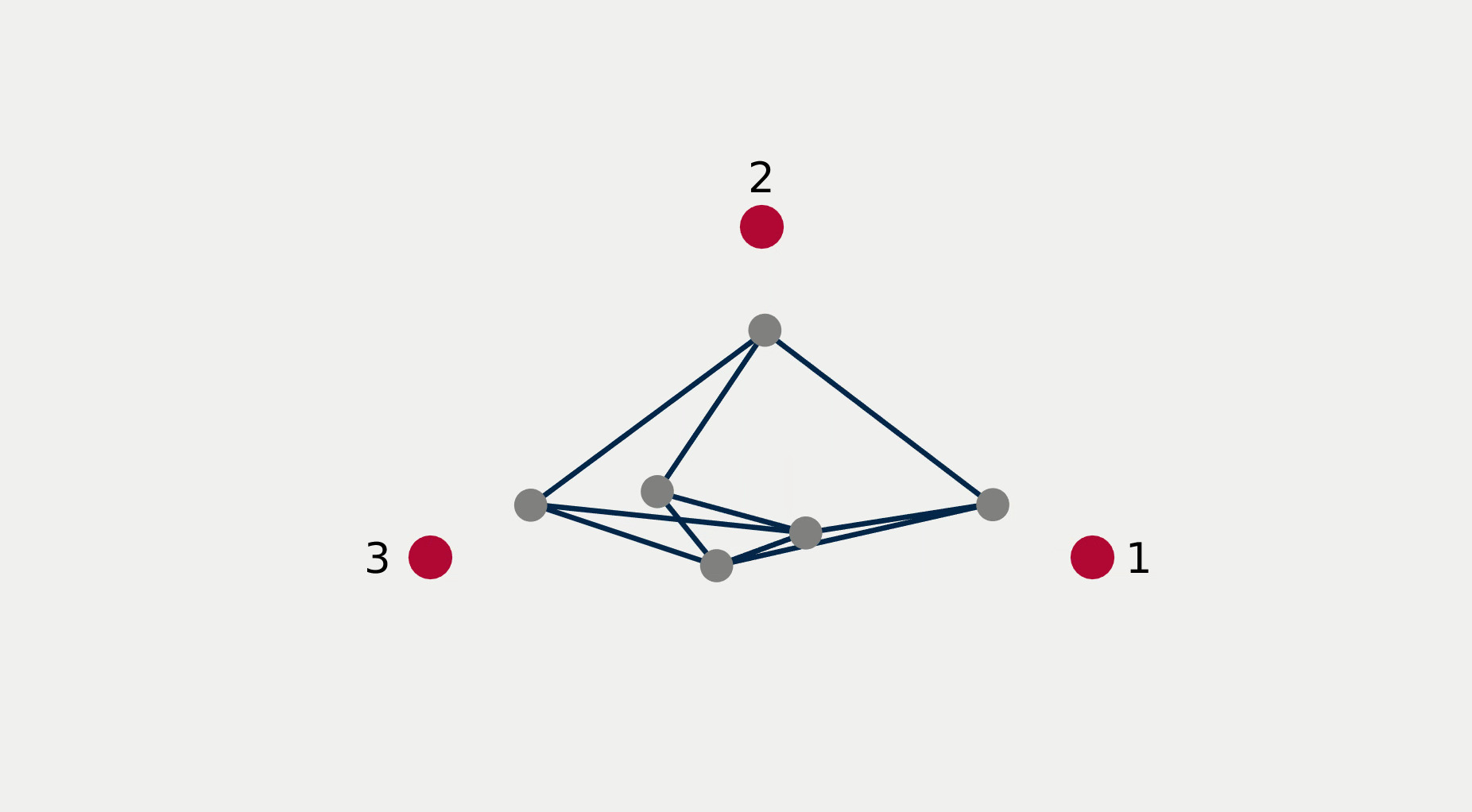}}\hfill
	\subfloat[][t = 23s]{\includegraphics[width=0.245\textwidth]{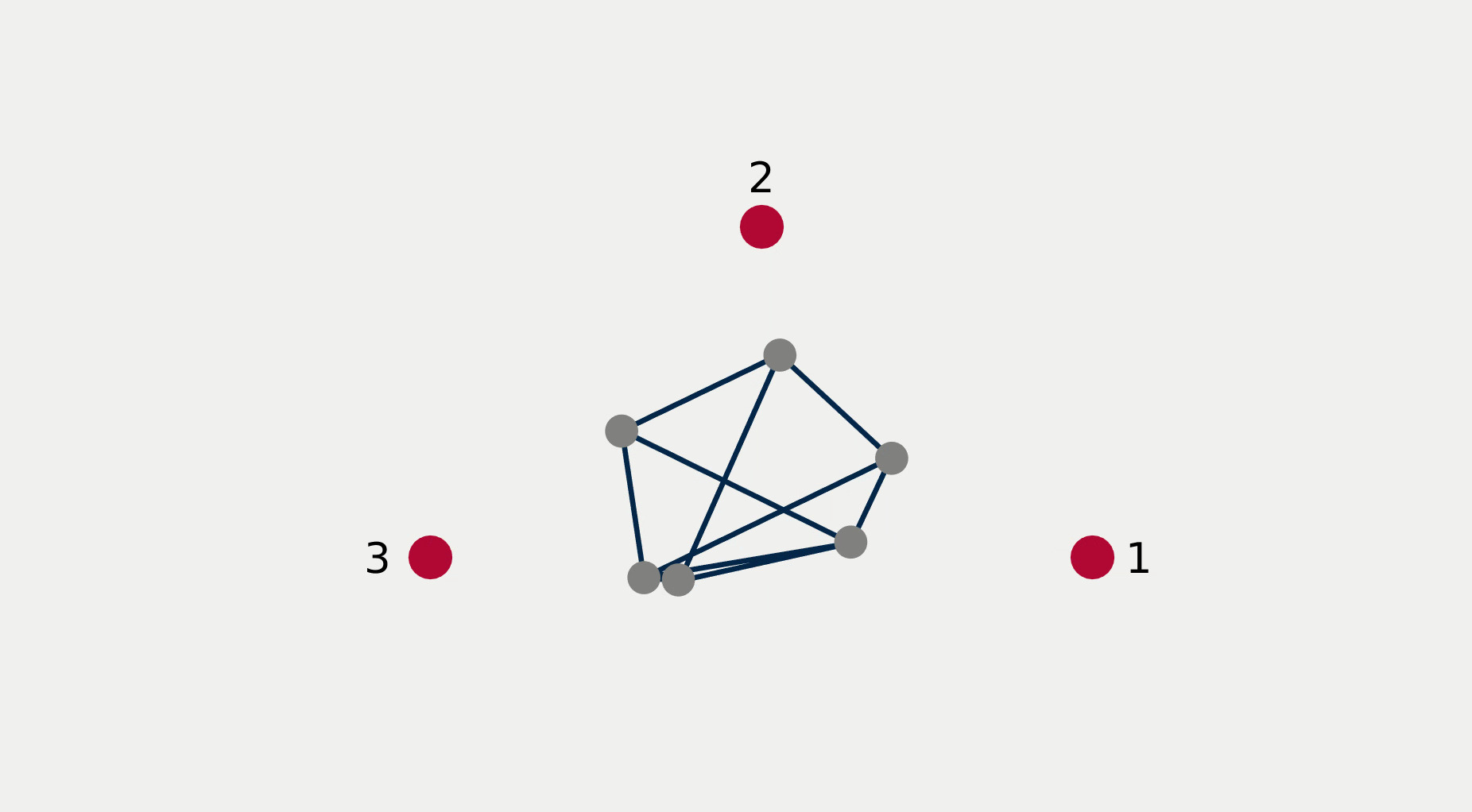}}\hfill
	\subfloat[][t = 27s]{\includegraphics[width=0.245\textwidth]{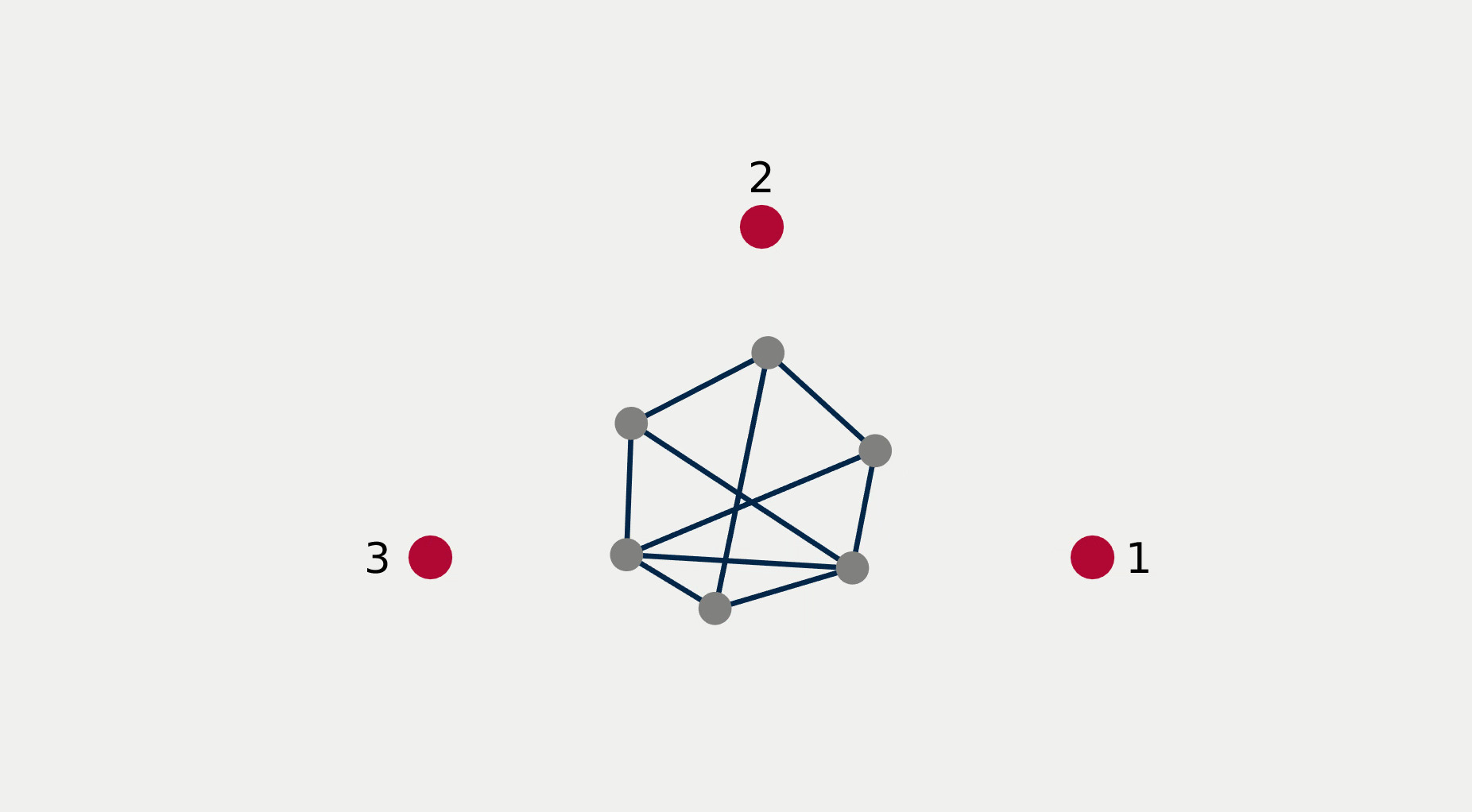}}\hfill
	\subfloat[][t = 30s]{\includegraphics[width=0.245\textwidth]{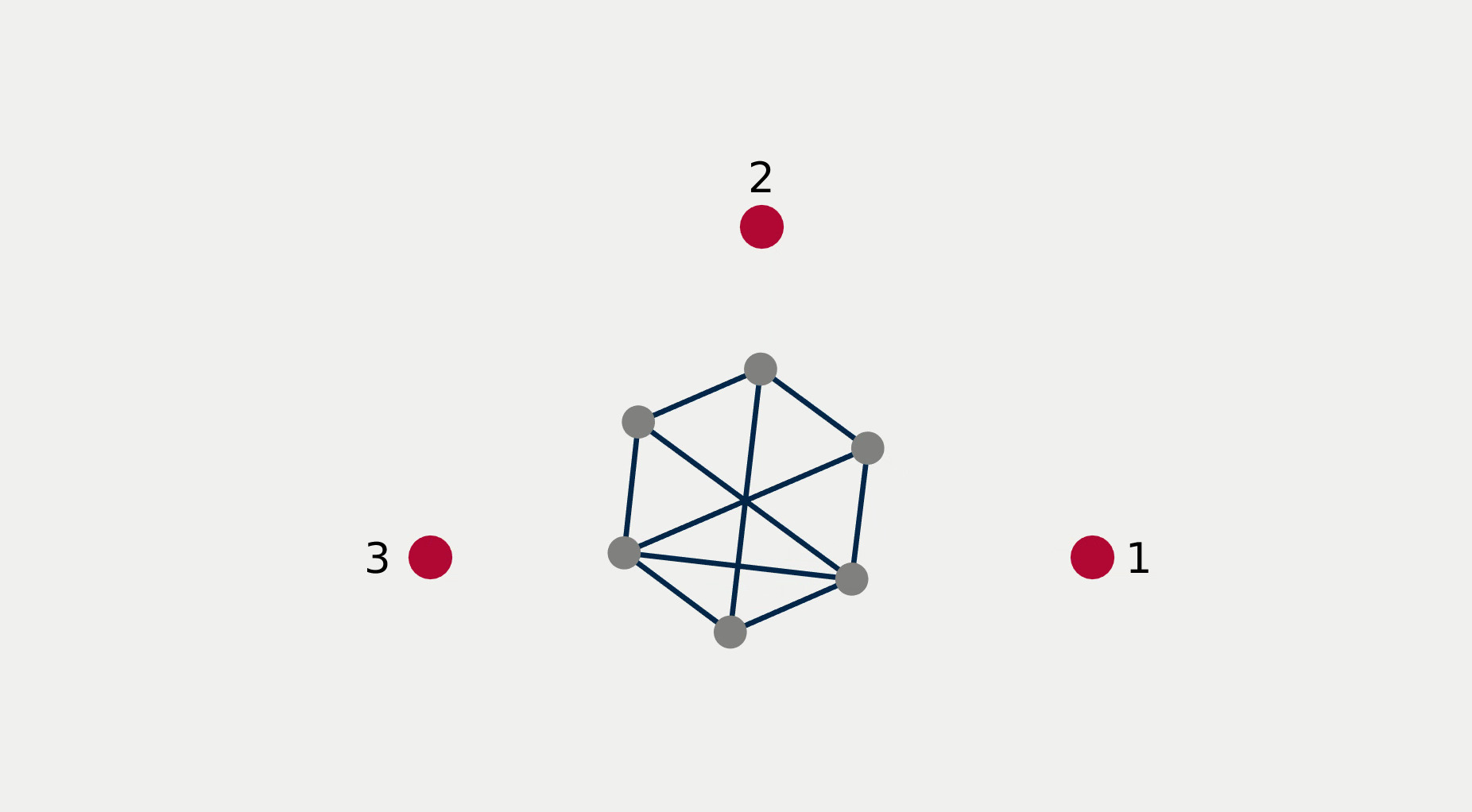}}\\
	\subfloat[][t = 34s]{\includegraphics[width=0.245\textwidth]{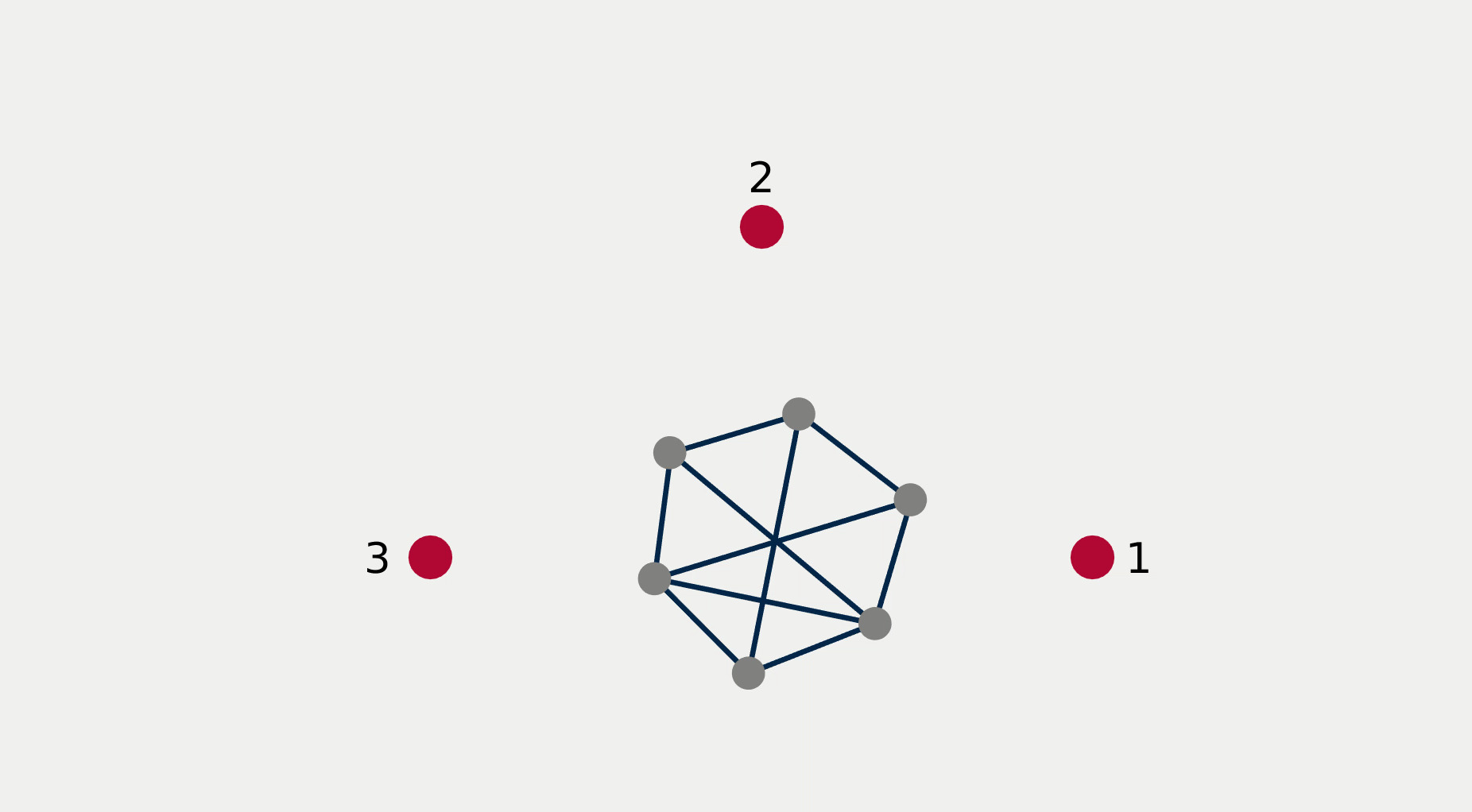}}\hfill
	\subfloat[][t = 38s]{\includegraphics[width=0.245\textwidth]{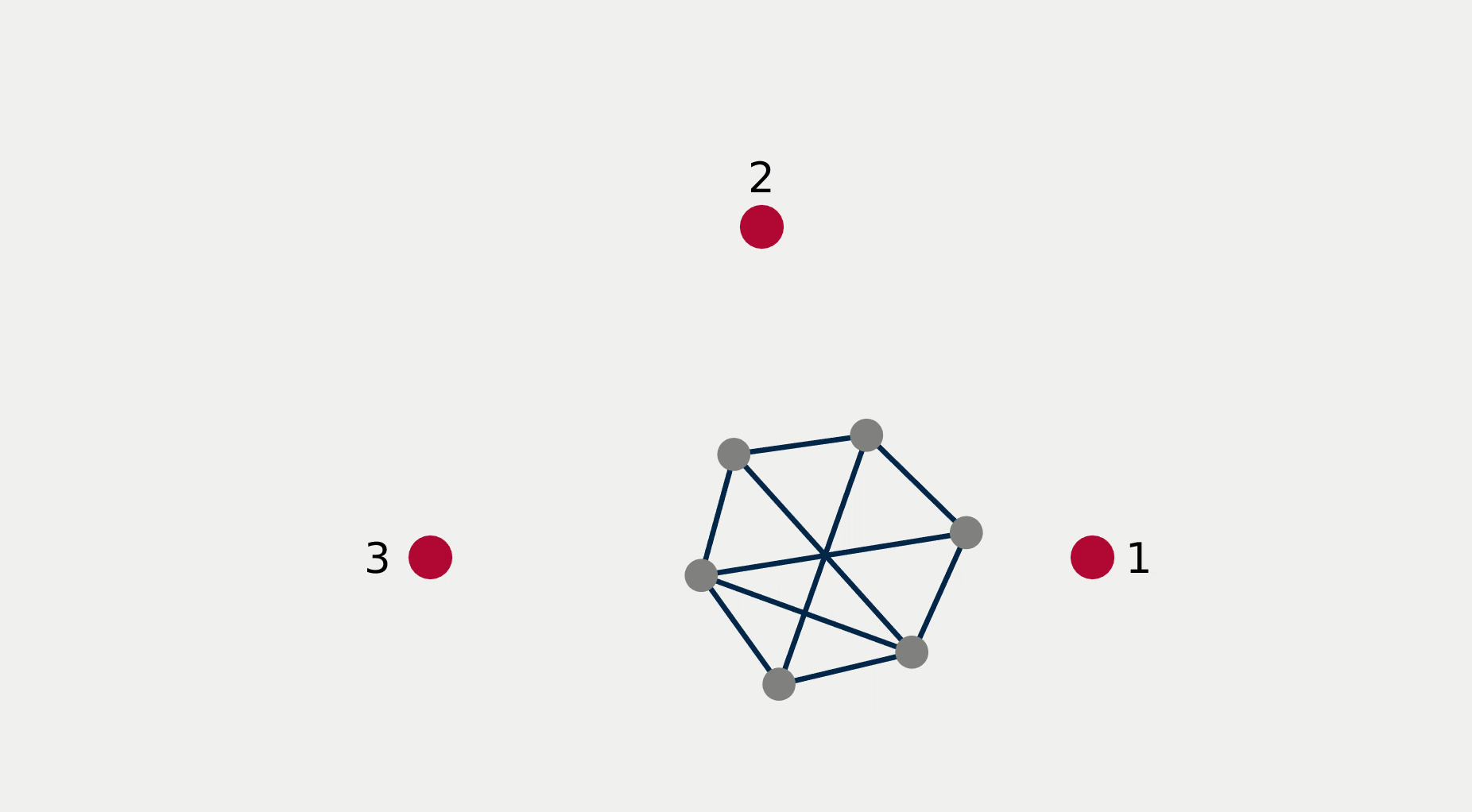}}\hfill
	\subfloat[][t = 42s]{\includegraphics[width=0.245\textwidth]{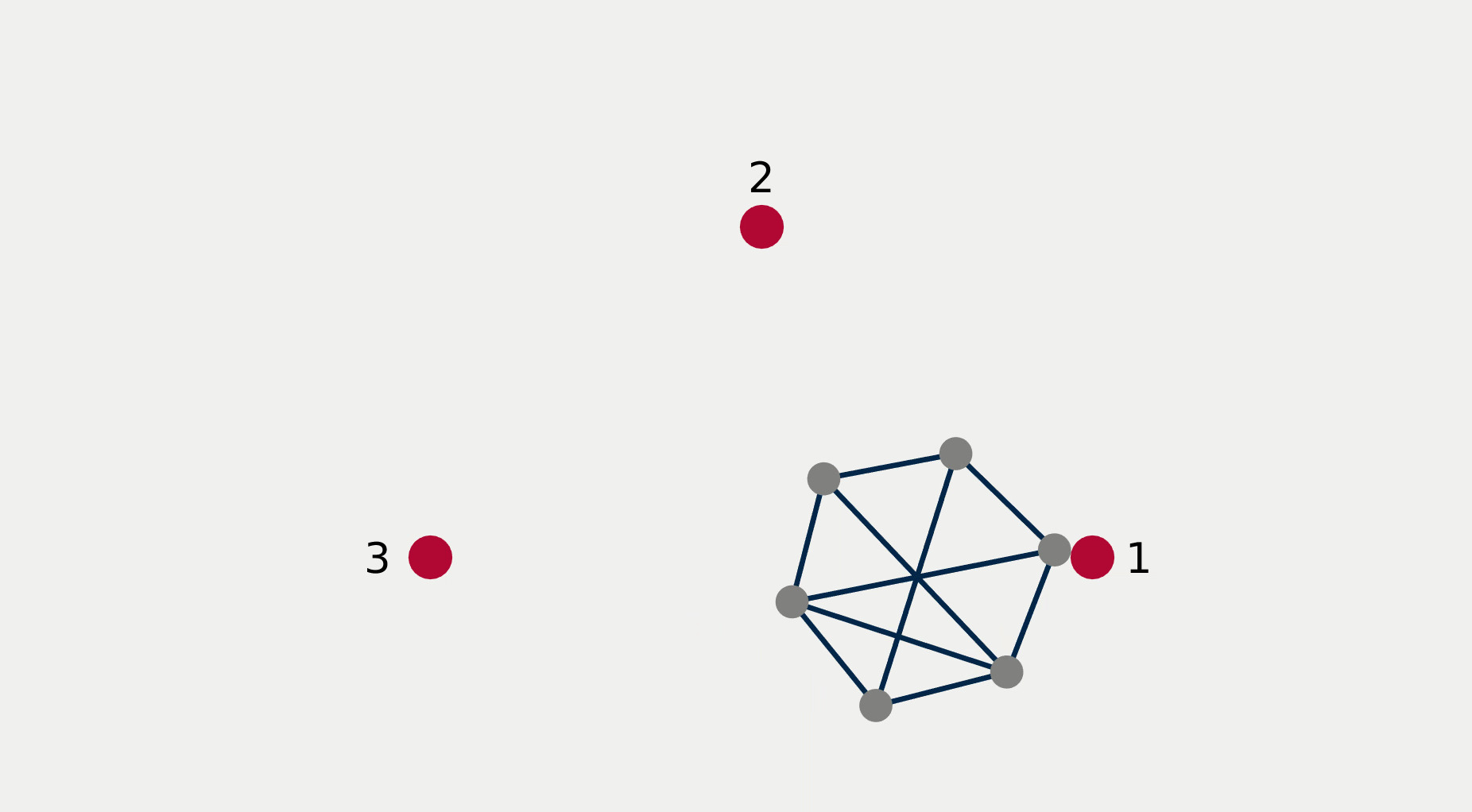}}\hfill
	\subfloat[][t = 45s]{\label{subfig:multi12}\includegraphics[width=0.245\textwidth]{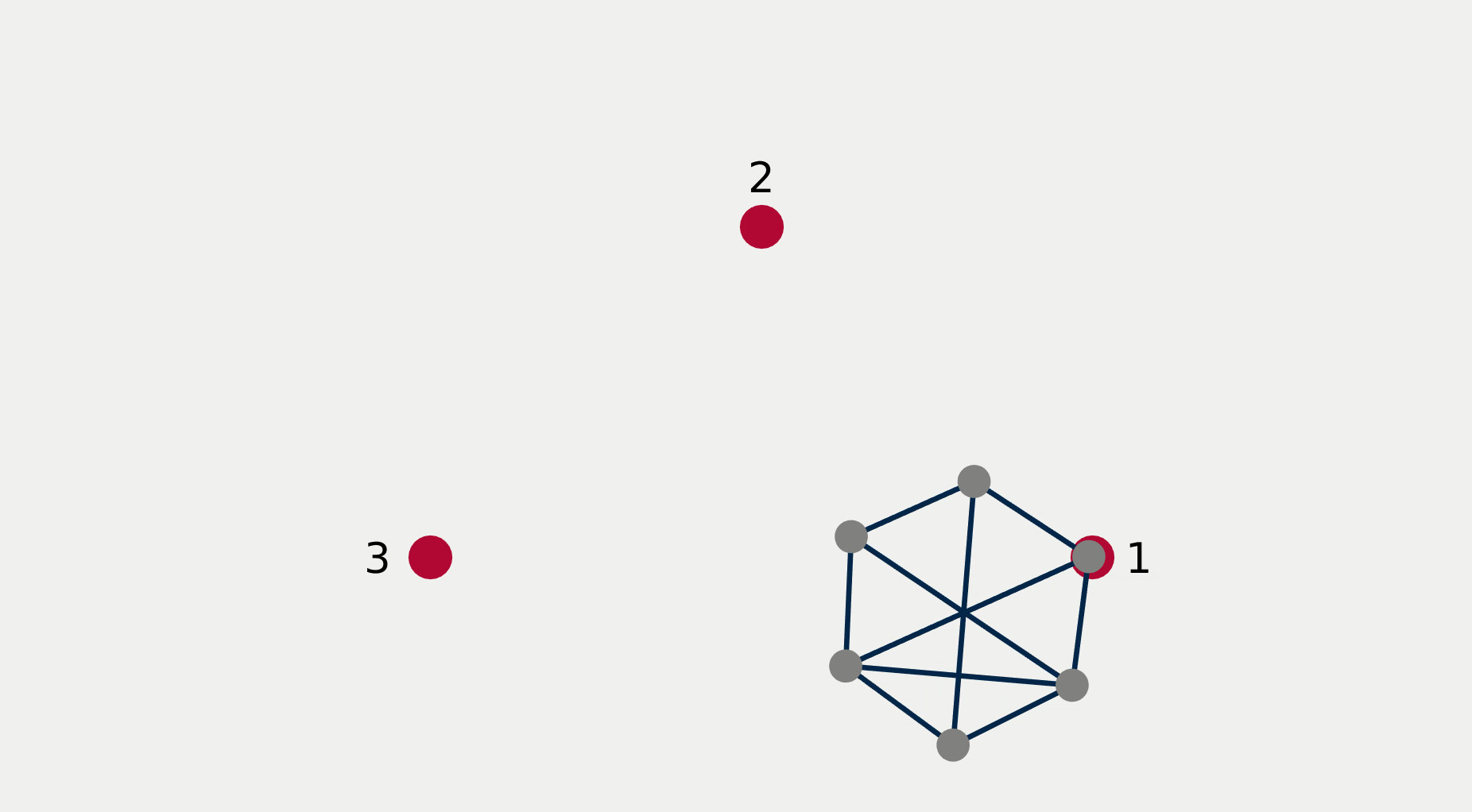}}\\
	\subfloat[][]{\label{subfig:multigraph}\includegraphics[trim={4cm 0cm 4cm 0cm}, clip,width=0.7\textwidth]{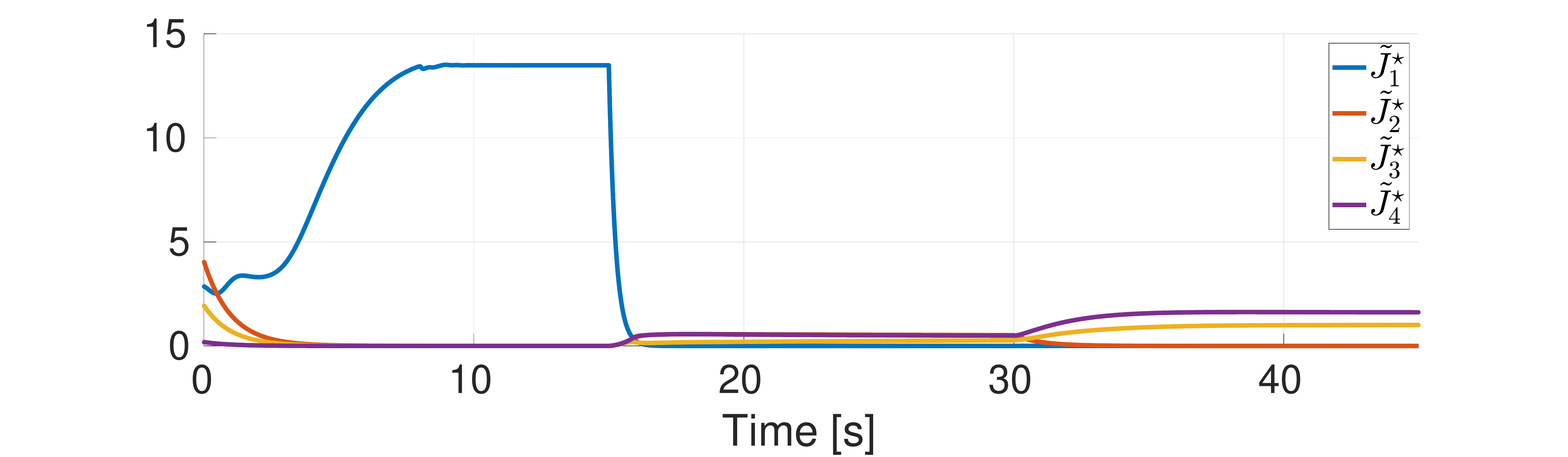}}
	\caption{Snapshots (\protect\ref{subfig:multi1}-\protect\ref{subfig:multi12}) and plot of $\tilde J^\star_1$, $\tilde J^\star_2$, $\tilde J^\star_3$, and $\tilde J^\star_4$ (\protect\ref{subfig:multigraph}) corresponding to the hexagonal formation control task and 3 go-to-goal tasks for robots 1, 2, and 3, respectively, recorded during the course of a simulated experiment with a multi-robot system. Robots are gray dots, connection edges between the robots used to assemble the desired formation are depicted in blue, goal points are shown as red dots.}
	\label{fig:Jstar_multirobot}
\end{figure*}

For multi-robot systems, the redundancy stems from the multiplicity of robotic units of which the system is comprised. In this section, we will showcase the execution of dependent tasks---two tasks are dependent if executing one prevents the execution of the other \cite{antonelli2009tro}---in different orders of priority. The multi-robot system is comprised of 6 planar robots modeled with single integrator dynamics and controlled to execute the following 4 tasks: All robots assemble an hexagonal formation (task $T_1$), robot 1 goes to goal point 1 (task $T_2$), robot 2 goes to goal point 2 (task $T_3$), robot 3 goes to goal point 3 (task $T_4$). While Task 1 is independent of each of the other tasks taken singularly, it is not independent of any pair of tasks 2, 3, and 4. This intuitively corresponds to the fact that it is possible to form a hexagonal formation in different points in space, but it might not be feasible to form a hexagonal formation while two robots are constrained to be in two pre-specified arbitrary locations.

Figure~\ref{fig:Jstar_multirobot} reports a sequence of snapshots and the graph of the value functions encoding the four tasks recorded during the course of the experiment. Denoting by $T_i\prec T_j$ the condition under which task $T_i$ has priority higher than $T_j$, the sequence of prioritized stacks tested in the experiment are the following:
\begin{equation}
	\label{eq:taskstacks}
	\begin{cases}
		T_2, T_3, T_4\prec T_1 & \quad 0s\le t< 15s\\
		T_1\prec T_2, T_3, T_4 & \quad 15s\le t< 30s\\
		T_1\prec T_2\prec T_3, T_4 & \quad 30s\le t\le 45s.
	\end{cases}
\end{equation}

The plot of the value functions in Fig.~\ref{subfig:multi12} shows how, for $0s\le t< 15s$, since the hexagonal formation control algorithm has lower priority compared to the three go-to-goal tasks, its value function $\tilde J^\star_1$ is allowed to grow while the other three value functions are driven to 0 by the velocity control input solution of \eqref{eq:multiRLtaskexecution} supplied to the robots. For $15s\le t< 30s$, the situation is reversed: the hexagonal formation control is executed with highest priority while the value functions encoding the three go-to-goal tasks are allowed to grow---a condition which corresponds to the non-execution of the tasks. Finally, for $30s\le t\le 45s$, task $T_2$, i.e. go-to-goal task for robot 1 to goal point 1 is added at higher priority with respect to tasks $T_3$ and $T_4$. Since this is independent by task $T_1$, it can be executed at the same time. As a result, as can be seen from the snapshots, the formation translates towards the red point marked with 1. Tasks $T_1$ and $T_2$ are successfully executed while tasks $T_3$ and $T_4$ are not executed since are not independent by the first two and they have lower priority.

\begin{remark}
	The optimization program responsible for the execution of multiple prioritized tasks encoded by value functions is solved at each iteration of the robot control loop. This illustrates how the convex optimization formulation of the developed framework is computationally efficient and therefore amenable to be employed in online settings. Alternative approaches for task prioritization and allocation in the context of multi-robot systems generally result in \mbox{(mixed-)integer} optimization programs, which are often characterized by a combinatorial nature and are not always suitable for an online implementation \cite{gerkey2004formal}.
\end{remark}

\subsection{Discussion}
\label{subsec:discussion}

The experiments of the previous section highlight several amenable properties of the framework developed in this paper for the prioritized execution of tasks encoded by a value function. First of all, its \textit{compositionality} is given by the fact that tasks can easily be inserted and removed by adding and removing constraints from the optimization program \eqref{eq:multiRLtaskexecution}. For the same reason the framework is \textit{incremental} and \textit{modular} as it allows for building a complex task using a number of subtasks which can be incrementally added to the constraints of an optimization-based controller. Moreover, it allows for seamless incorporation of priorities among tasks, and, as we showcased in Section~\ref{subsec:multirobot}, these priority can also be switched in an online fashion, in particular without the need of stopping and restarting the motion of the robots. Furthermore, Proposition~\ref{prop:stability} shows that the execution of multiple tasks using the constraint-driven control is \textit{stable} and the robotic system will indeed execute the given tasks according to the specified priorities. Finally, as the developed optimization program is a convex QP, its low \textit{computational complexity} allows for an efficient implementation in online settings even under real-time constraints on computationally limited robotic platforms.

\section{Conclusion}
\label{sec:conclusion}

In this paper, we presented an optimization-based framework for the prioritized execution of multiple tasks encoded by value functions. The approach combines control-theoretic and learning techniques in order to exhibit properties of compositionality, incrementality, stability, and low computational complexity. These properties render the proposed framework suitable for online and real-time robotic implementations. A multi-robot simulated scenario illustrated its effectiveness in the control of a redundant robotic system executing a prioritized stack of tasks.

\appendix

\section{Comparison Between Optimal Control, Optimization-Based Control, and RL policy}
\label{app:equivalence}

\begin{figure}
	\centering
	\includegraphics[width=0.95\textwidth,trim={4cm 0.5cm 4cm 1.5cm 0},clip]{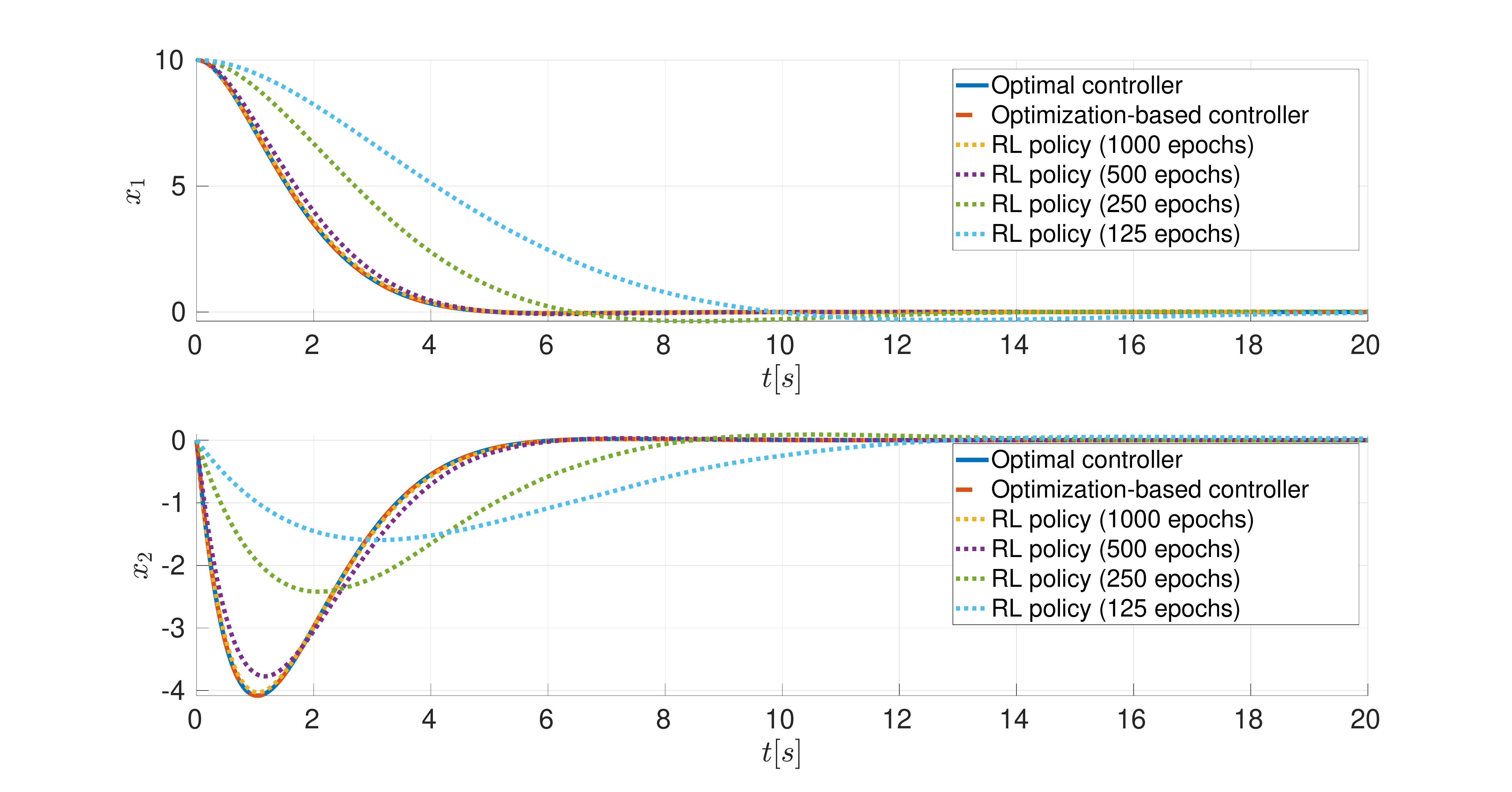}
	\caption{Comparison between the optimal controller (given in \eqref{eq:optimalpolicy}), RL policy (based on the approximate value function $\tilde J^\star$ \eqref{eq:optimalcosttogo}), and optimization-based controller (solution of \eqref{eq:minnorm} with $V=\tilde J^\star$) employed to stabilize a double-integrator system to the origin of its state space, i.e. driving both $x_1$ and $x_2$ to 0. As can be seen, when trained for a sufficiently long time, the RL policy results in the optimal controller, which is also equivalent to the optimization-based controller.}
	\label{fig:comparison}
\end{figure}

To compare optimal controller, optimization-based controller, and RL policy, in this section, we consider the stabilization of a double integrator system to the origin. The system dynamics are given by:
$\dot x = \begin{bmatrix}
	0&1\\
	0&0
\end{bmatrix} x + \begin{bmatrix}
	0\\
	1
\end{bmatrix} u$,
where $x=[x_1,x_2]\tr\in\R^2$ and $u\in\R$. The instantaneous cost considered in the optimal control problem \eqref{eq:optimalcontrolproblem} is given by $q(x) + u^2$ where $q(x) = x\tr x$. The reward function of the value iteration algorithm employed to learn an approximate representation of the value function has been set to $g(x,u) = -q(x) - u^2$, and the resulting value function $\tilde J^\star$ has been shifted so that $\tilde J^\star(0) = 0$.

The results of the comparison are reported in Fig.~\ref{fig:comparison}. Here, the optimization-based controller solution of \eqref{eq:minnorm} with $V=\tilde J^\star$ is compared to the optimal controller given in \eqref{eq:optimalpolicy}, and the RL policy corresponding to the approximate value function $\tilde J^\star$. As can be seen, the optimization-based controller and the optimal controller coincide, while the RL policy becomes closer and closer as the number of training epochs increases.

\section{Implementation Details}
\label{app:implementation}

The results reported in Section~\ref{sec:result} have been obtained using a custom value function learning algorithm written in Python. The details of each multi-robot task are given in the following.

Each robot in the team of $N$ robots is modeled using single integrator dynamics $\dot x_i = u_i$, where $x_i,u_i\in\R^2$ are position and velocity input of robot $i$. The ensemble state and input will be denoted by $x$ and $u$, respectively. For the formation control task, the expression of the cost $g$ is given by $g(x,u) = 1000 - 0.01 (-\mc E(x) - 10\|u\|^2)$, where the value of $\mc E(x)$ is the formation energy defined as $\mc E(x) = \sum_{i=1}^N \sum_{j\in\msc N_i} (\|x_i-x_j\|^2-W_{ij}^2)^2$,
$\msc N_i$ being the neighborhood of robot $i$, i.e. the set of robots with which robot $i$ shares an edge, and
\begin{equation}
	\label{eq:app:weightmatrix}
	W = \begin{bmatrix}
		0 & l & \sqrt{3} l & 2l & 0 & l\\
		l & 0 & l & 0 & 2l & 0\\
		\sqrt{3} l & l & 0 & l & 0 & 2l\\
		2l & 0 & l & 0 & l & 0\\
		0 & 2l & 0 & l & 0 & l\\
		l & 0 & 2l & 0 & l & 0
	\end{bmatrix}
\end{equation}
with $l=1$. The entry $ij$ of the matrix $W$ corresponds to the desired distance to be maintained between robots $i$ and $j$.

The cost function $g$ for the go-to-goal tasks is given by $g(x,u) = 100 - 0.01 (-\|x-\hat x\|^2 - \| u \|^2)$, where $\hat x\in\R^2$ is the desired goal point.

\begin{remark}[Combination of single-robot and multi-robot tasks]
Single-robot tasks (e.g. the go-to-goal tasks considered in this paper) are combined with multi-robot tasks (e.g. the formation control task) by defining the task gradient required to compute $L_{f_0} \tilde J_i^\star(x)$ and $L_{f_1} \tilde J_i^\star(x)$ in the optimization program \eqref{eq:multiRLtaskexecution} in the following way:
$\frac{\partial \tilde J_i^\star}{\partial x} = \begin{bmatrix} 0 & \cdots & 0 & \frac{\partial \tilde J_{ij}^\star}{\partial x} & 0 & \cdots & 0 \end{bmatrix}$,
where the $j$-th entry $\tilde J_{ij}^\star$ is the approximate value function for task $i$ and robot $j$.
\end{remark}

\bibliographystyle{spmpsci}
\bibliography{bib/dars2022refs}

\end{document}